\providecommand{\U}[1]{\protect \rule{.1in}{.1in}}
\providecommand{\U}[1]{\protect \rule{.1in}{.1in}}
\providecommand{\U}[1]{\protect \rule{.1in}{.1in}}
\def \Pr{\mathop{\rm Pr}}
\newtheorem{theorem}{Theorem}
\begin{document}

\title{Incorporating Expert Rules into Neural Networks in the Framework of Concept-Based Learning}
\author{Andrei V. Konstantinov and Lev V. Utkin\\
\small{Higher School of Artificial Intelligence Technologies}
\\
\small{Peter the Great St.Petersburg Polytechnic University}\\
\small{St.Petersburg, Russia}
\\
\small{e-mail: andrue.konst@gmail.com, lev.utkin@gmail.com}}
\date{}
\maketitle

\begin{abstract}
A problem of incorporating the expert rules into machine learning models for
extending the concept-based learning is formulated in the paper. It is
proposed how to combine logical rules and neural networks predicting the
concept probabilities. The first idea behind the combination is to form
constraints for a joint probability distribution over all combinations of
concept values to satisfy the expert rules. The second idea is to represent a
feasible set of probability distributions in the form of a convex polytope and
to use its vertices or faces. We provide several approaches for solving the
stated problem and for training neural networks which guarantee that the
output probabilities of concepts would not violate the expert rules. The
solution of the problem can be viewed as a way for combining the inductive and
deductive learning. Expert rules are used in a broader sense when any logical
function that connects concepts and class labels or just concepts with each
other can be regarded as a rule. This feature significantly expands the class
of the proposed results. Numerical examples illustrate the approaches. The
code of proposed algorithms is publicly available.

\textit{Keywords}: concept-based learning, expert rules, neural networks,
classification, logical function, inductive and deductive learning.

\end{abstract}

\section{Introduction}

Concept-based learning (CBL) is a well-established approach to express
predictions of a machine learning model in terms of high-level concepts
derived from raw features instead of in terms of the raw features themselves
\cite{lage2020learning}, i.e. unlike using a pixel-based level, CBL provides a
higher level of connection between the image and the decision using concepts.
The understanding of the decision becomes straightforward once the
interpretation of each concept is determined \cite{wang2023learning}.
High-level concepts can be interpreted as additional expert knowledge.
Therefore, CBL aims to integrate expert knowledge or human-like reasoning into
machine learning models. In the context of machine learning, incorporating
high-level concepts into the learning process may significantly improve the
efficiency and accuracy of models. Moreover, high-level concepts may improve
the explainability of the machine learning model outputs because they are
intuitive to users \cite{kim2018interpretability,yeh2020completeness}. On the
one hand, concepts can be viewed as high-level features of an object, for
example, a color of a bird in a picture. On the other hand, the same concepts
can be regarded as complex classification labels. The training of the concept
model often requires the concept annotations in the form of binary labels,
i.e. concept is \textquotedblleft present\textquotedblright \ or
\textquotedblleft not present\textquotedblright, for each defined concept and
image. However, concepts can be represented in other forms, for example, by
means of indices assigned to elements of a concept description set. It should
be noted that, indices can always be converted into binary concepts.

Many recent CBL approaches consider human-specified concepts as an
intermediate step to derive the final predictions. Concept Bottleneck Models
(CBMs) \cite{koh2020concept} and Concept Embedding Models (CEMs)
\cite{espinosa2022concept} are CBL models that implement these approaches.
According to the CBM approach, the CBL model provides the concept prediction
in the middle of the decision-making process, i.e. it explicitly predicts the
concept labels from images and then predicts the final label based on the
concept label predictions. The classifier deriving the final label has access
only to the concept representation, and the decision is strongly tied to the
concepts \cite{wang2023learning}. At that, the training procedure can be
implemented in an independent way when the concept labels are trained
independently on the final label. Another way for learning is to train in an
end-to-end manner.

In contrast to CBLs, we propose quite different models which can roughly be
called concept-based models because they use concepts for training like CBLs.
However, the main goal of the proposed model is to combine the inductive
learning tools (neural networks) with the knowledge-based expert rules of the
form \textquotedblleft IF ..., THEN ...\textquotedblright, which are elicited
from experts and constructed by means of concepts. For example, the rule from
the lung cancer diagnostics can look like \textquotedblleft IF
\textbf{Finding} is \emph{the mass}, \textbf{Contour} is \emph{spicules},
\textbf{Inclusion} is the \emph{air cavity}, THEN a \textbf{Diagnosis} is the
\emph{squamous cells carcinoma}\textquotedblright. Here concepts are shown in
Bold, the concept values are shown in Italic. Another illustrative example is
taken from the Bird identification dataset (CUB) \cite{Wah-etal-11}:
\textquotedblleft IF the \textbf{Head} is \emph{red}, the \textbf{Back color}
is \emph{black}, the \textbf{Breast color} is \emph{white}, the \textbf{Crown
color} is \emph{red}, the \textbf{Wing color} is \emph{white}, the
\textbf{Bill shape} may be \emph{dagger} OR \emph{all-purpose}, THEN the
\textbf{Bird} is a~\emph{red-headed woodpecker}\textquotedblright. In the
context of medicine, a doctor often diagnoses a disease based on certain rules
from a medical handbook. Using such rules is the basis of a doctor's work.
Similar examples of using expert rules can be found in various applied fields,
not just in medicine. Therefore, it is important to incorporate the expert
rules into machine learning models.

It is assumed that we have knowledge how final labels (consequents) of
instances depend on values of concepts (antecedents) from a set of
knowledge-based expert rules. Moreover, we have a partially labeled training
set consisting of images with some concept labels and with some targets which
will be called as final concepts. The question is how to construct and to
train a neural network which deals simultaneously with the concept-based
dataset and the knowledge-based rules to provide accurate predictions and to
explain the predictions by using concepts. In order to answer this question,
we propose two approaches to taking into account the expert rules.

First, we represent the knowledge-based expert rules in the form of logical
functions consisting of the disjunction and conjunction operations of
indicator functions corresponding to values of concepts. At that, the target
value is also represented as a concept. By having the logical functions, we
can write constraints for a joint probability distribution over all
combinations of concept values to satisfy the expert rules. This allows us to
construct and to train a neural network which guarantees that the output
probabilities of concepts would not violate the expert rules. We formulate the
corresponding feasible set of probability distributions in the form of a
convex polytope and analytically find its vertices. By means of the vertices,
a point inside the polytope can be constructed that determines marginal
probability distributions of concepts. Additionally, we can define the same
polytope in H-representation by setting its faces. It is useful because the
number of faces can be significantly smaller than the number of vertices.
These two ways to define the convex polytope form a base for developing four
approaches for constructing neural networks incorporating the expert rules.

An important peculiarity of the proposed models is that the expert rules
compensate the incomplete concept labeling of instances in datasets whereas
existing concept-based models may lead to overfitting when many images have
incomplete concept description. Moreover, the expert rules allow us to
compensate a partial availability of targets in the training set.

In sum, we try to incorporate the knowledge-based expert rules into a neural
network to improve predictions and their interpretation. The knowledge of
expert rules changes probabilities of concepts as well as predictions
corresponding to new instances which are classified and explained. The
proposed models can be viewed as a way for combining the inductive and
deductive learning.

It is also important to point out that the term \textquotedblleft expert
rules\textquotedblright \ is used in the proposed models not only to represent
the standard \textquotedblleft IF ...,THEN ...\textquotedblright \ rule, but in
a broader sense. Any logical function that connects concepts and class labels
or just concepts with each other can be regarded as a rule. This feature
significantly expands the class of the proposed results.

The code of proposed algorithms is available in https://github.com/andruekonst/ecbl.

\section{Related work}

\textbf{Concept-based learning models.} Many models taking into account
various aspects of CBL have been developed by following the works
\cite{kim2018interpretability,yeh2020completeness}. In particular, the concept
attribution approach to tabular learning by providing an idea on how to define
concepts over tabular data was proposed in \cite{pendyala2022concept}. An
algorithm for learning visual concepts directly from images, using
probabilistic predictions generated by visual classifiers as the input to a
Bayesian generalization model was proposed in \cite{jia2013visual}. A novel
concept-based explanation framework named Prototypical Concept-based
Explanation is proposed in \cite{dreyer2023understanding}. An idea of the
framework is that it provides differences and similarities to the expected
model behavior via prototypes which are representative predictions summarizing
the model behavior in condensed fashion. An analysis of correlations between
concepts and methods beyond the test accuracy for evaluating concept-based
models, with regard to whether a concept has truly been learned by the model
were presented in \cite{heidemann2023concept}.

Lage et al. \cite{lage2020learning} claim that many CBL models define concepts
which are not inherently interpretable. To overcome this limitation, the
authors proposed a CBL model where concepts are fully transparent, thus
enabling users to validate whether mental and machine models are aligned. An
important peculiarity of the model is that the corresponding learning process
incorporates user feedback directly when learning the concept definitions:
rather than labeling data, users mark whether specific feature dimensions are
relevant to a concept \cite{lage2020learning}. To relax an assumption that
humans are oracles who are always certain and correct in decision making
Collins et al. \cite{collins2023human} study how existing concept-based models
deal with uncertain interventions from humans. An attempt to suppress false
positive explanations by providing explanations based on statistically
significant concepts was carried out in \cite{xu2023statistically} where the
authors guarantee the reliability of the concept-based explanation by
controlling an introduced false discovery rate of the selected concepts.

Applications of the concept-based explanation in medicine can be found in
\cite{marcinkevivcs2024interpretable,Meldo-Utkin-etal-2020,patricio2023coherent,patricio2023towards,yan2023robust}%
. The use of CBL models for time-series data are presented in
\cite{obermair2023example,tang2020interpretable}. Taking into account the
anomaly detection problem, a framework for learning a set of concepts that
satisfy properties of the out-of-distribution detection and help to explain
the out-of-distribution predictions was presented in \cite{choi2023concept}.
In the same work, new metrics for assessing the effectiveness of a particular
set of concepts for explaining the out-of-distribution detection detectors
were introduced. The concept-based model for anomaly detection was also
considered in \cite{sevyeri2023transparent}.

Promises and pitfalls of black-box concept learning models
\cite{mahinpei2021promises}. A review of recent approaches for explaining
concepts in neural networks was provided in \cite{lee2023neural}.

\textbf{Concept bottleneck models. }Following \cite{koh2020concept}, many
extensions of the CBM model have been proposed. A part of the CBL models
belongs to post-hoc models. These models analyze the whole model only after it
has finished the training process. The post-hoc CBMs were introduced in
\cite{yuksekgonul2022post}. These models convert any pre-trained model into a
concept bottleneck model. This conversion can be done by using the concept
activation vectors (CAVs) \cite{kim2018interpretability} in a special way. A
Cooperative-CBM (coop-CBM) model was proposed in \cite{Sheth-Kahou-23}. The
model aims at addressing the performance gap between CBMs and standard
black-box models. It uses an auxiliary loss that facilitates the learning of a
rich and expressive concept representation. In order to take into account the
ambiguity in the concept predictions, a probabilistic CBM was proposed in
\cite{kim2023probabilistic}, which exploits probabilistic embeddings in the
concept embedding space and reflects uncertainty in the concept predictions.
The model maps an image to the concept embeddings with probabilistic
distributions which model concept uncertainties.

According to \cite{espinosa2022concept}, one of the drawbacks of many CBM
models is that they unable to find optimal compromises between high task
accuracy, robust concept-based explanations, and effective interventions on
concepts. In order to overcome this drawback, concept embedding models were
introduced in \cite{espinosa2022concept}. The models can be viewed as a family
of CBMs that represents each concept as a supervised vector, i.e. the models
learn two embeddings per concept, one for when it is active, and another when
it is inactive. Following the concept embedding model
\cite{espinosa2022concept}, the concept bottleneck generative models were
introduced in \cite{ismail2023concept}, where a concept bottleneck layer is
constrained to encode human-understandable features. Raman et al.
\cite{Raman-etal-24} studied whether CBMs correctly capture the degree of
conditional independence across concepts when the concepts are localized
spatially and semantically. Margeloiu et al. \cite{margeloiu2021concept}
demonstrated that concepts may not correspond to anything semantically
meaningful in input space. A simple procedure allowing to perform
concept-based instance-specific interventions on an already trained black-box
neural network is proposed in \cite{Marcinkevics-etal-24}.

A novel image representation learning method, called the Concept-based
Explainable Image Representation Learning and adept at harnessing human
concepts to bolster the semantic richness of image representations was
introduced in \cite{cui2023ceir}. A case of implicit knowledge corresponding
to the unsupervised (unlabeled) concepts was studied in
\cite{sawada2022concept} where the authors propose to adopt self-explaining
neural networks to obtain the unsupervised concepts. These networks are
composed of an encoder-decoder architecture and a parametrizer estimating
weights of each concept. Energy-based CBMs which use a set of neural networks
to define the joint energy of candidate (input, concept, class) tuples are
introduced in \cite{Xu-Qin-etal-24}.

CBMs were extended in \cite{chauhan2023interactive} to interactive prediction
settings such that the model can query a human collaborator for the label to
some concepts. In order to improve the final prediction, an interaction policy
was developed in \cite{chauhan2023interactive} that chooses which concepts
should be requested for labeling. An approach to modify CBMs for images
segmentation, objects fine classification and tracking was developed in
\cite{pittino2023hierarchical}. Two causes of performance disparity between
soft (inputs to a label predictor are the concept probabilities) and hard (the
label predictor only accepts binary concepts) CBMs were proposed in
\cite{havasi2022addressing}. They allow hard CBMs to match the predictive
performance of soft CBMs without compromising their resilience to leakage. A
similar task was solved in \cite{Sun-Yan-etal-24}. Marconato et al.
\cite{marconato2022glancenets} provided a definition of interpretability in
terms of alignment between the representation of a model and an underlying
data generation process, and introduced GlanceNets, a new CBM that exploits
techniques from disentangled representation learning and open-set recognition
to achieve alignment, thus improving the interpretability of the learned concepts.

Drawing inspiration from the CLIP model \cite{radford2021learning}, a
foundation model that establishes a shared embedding space for both text and
images, the CLIP-based CBMs are proposed in \cite{kazmierczak2023clip}.

\section{Background}

The concept-based classification is a task to construct a potentially
black-box classifier and to explain the constructed classifier's decision
process through human-interpretable concepts \cite{xu2023statistically}.

We are given a set of inputs $\mathbf{x}_{i}\in$ $\mathcal{X}\subset
\mathbb{R}^{d}$ and the corresponding targets $y_{i}\in \mathcal{Y}%
=\{1,2,...,K\}$. Suppose we are also given a set of $m$ pre-specified concepts
$\mathbf{c}_{i}=(c_{i}^{(1)},...,c_{i}^{(m)})\in \mathcal{C}$ such that the
training set comprises $(\mathbf{x}_{i},y_{i},\mathbf{c}_{i})$, $i=1,...,n$.
Typically, concepts can be represented as a binary $m$-length vector
$\mathbf{c}_{i}$ where its $j$-th element $c_{i}^{(j)}$denotes whether the
$j$-th concept is present or not in the input $\mathbf{x}_{i}$.

Generally, the CBL model aims to find how targets depend on concepts and
inputs, i.e., to find a function $h:(\mathcal{X},\mathcal{C})\rightarrow
\mathcal{Y}$. However, the CBL model can also be used to interpret how
predictions depend on concepts corresponding to inputs. In order to solve this
task, CBMs have been developed, which learn two mappings, one from the input
to the concepts $g:$ $\mathcal{X}\rightarrow \mathcal{C}$, and another from the
concepts to the outputs $f:\mathcal{C}\rightarrow \mathcal{Y}$. In this case,
the CBM prediction for a new input instance $\mathbf{x}$ is defined as
$y=f(g(\mathbf{x}))$.

There are different problem settings in the framework of CBL models
\cite{Xu-Qin-etal-24}:

\emph{Prediction}: Given the input $\mathbf{x}$, the goal is to predict the
class label $y$ and the associated concepts $\mathbf{c}$ to interpret the
predicted class label, that is to find the probability $\mathop{\rm Pr}
(\mathbf{c},y\mid \mathbf{x})$. Note that CBMs decompose $\mathop{\rm Pr}
(\mathbf{c},y\mid \mathbf{x})$ to predict $\mathop{\rm Pr} (\mathbf{c\mid x})$
and then $\mathop{\rm Pr} (y\mid \mathbf{x})$.

\emph{Concept Correction/Intervention}: Given the input $\mathbf{x}$ and a
corrected concept $c^{(k)}$, predict all the other concepts $c^{(i)}$,
$i=1,...,m,$ $i\neq k$.

\emph{Conditional Interpretations}: Given an image with class label $y$ and
concept $c^{(j)}$ , what is the probability that the model correctly predicts
concept $c^{(k)}$.

We propose a different problem setting which uses concepts to incorporate the
available expert rules of the form: \textquotedblleft IF concepts have certain
values, THEN the target is equal to a certain class\textquotedblright, into
neural networks.

\section{Expert rules and concepts}

\subsection{Problem statement}

In contrast to the above definition of concepts as binary variables, which is
conventional in many models, we assume that each concept $c^{(i)}$ can take
one of $n_{i}$ values denoted as $\mathcal{C}^{(i)}=\{1,...,n_{i}\}$,
$i\in \{0,\dots,m\}$. We call $\mathcal{C}^{(i)}$ the $i$-th concept outcome
set. The concept vector is $\mathbf{c}=(c^{(0)},c^{(1)},...,c^{(m)}%
)\in \mathcal{C}^{\times}$, where $\mathcal{C}^{\times}$ is the concept domain
produced by the Cartesian product $\mathcal{C}^{\times}=\mathcal{C}%
^{(0)}\times \dots \times \mathcal{C}^{(m)}$. We consider the concept $c^{(0)}$
as a special concept corresponding to the target variable $y$.

Let us introduce the logical literal $h_{i}^{(j)}(\mathbf{c})=\mathbb{I}$
$[c^{(j)}=i]$ which takes the value $1$, if the concept $c^{(j)}$ has the
value $i$. A set of expert rules is formulated as a logical expression
$g(\mathbf{c})$ over literals $h_{i}^{(j)}(\mathbf{c})$. Formally a set of
expert rules may be represented as a mapping $g:\mathcal{C}^{\times}%
\mapsto \{0,1\}$, where $0$ means FALSE, and $1$ means TRUE.

For example, the rule \textquotedblleft IF $c^{(1)}=3$ THEN $c^{(2)}%
=1$\textquotedblright \ is equivalent to the function:
\begin{equation}
g=(c^{(1)}=3)\rightarrow(c^{(2)}=1)=h_{3}^{(1)}\rightarrow h_{1}^{(2)}=(\lnot
h_{3}^{(1)})\vee h_{1}^{(2)},
\end{equation}
where $\rightarrow$ is implication and $\lnot$ is negation and the argument
$\mathbf{c}$ is omitted for short. A literal negation is equivalent to
disjunction of the rest outcomes of the same concept. For example, $\lnot
h_{3}^{(1)}\equiv h_{1}^{(1)}\vee h_{2}^{(1)}$, when $\mathcal{C}%
^{(1)}=\{1,2,3\}$.

Let $X$ be a random vector taking values from $\mathcal{X} \subset
\mathbb{R}^{a}$. We introduce $C^{(0)}, \dots, C^{(m)}$ as discrete random
variables for the concepts, taking values from $\mathcal{C}^{(0)}, \dots,
\mathcal{C}^{(m)} $, respectively. The concept random vector is $C =
(C^{(0)},C^{(1)},...,C^{(m)})$.

The task is to estimate marginal concept probabilities
$\mathop{\rm Pr}(C^{(i)}=j\mid X=\mathbf{x})$ conditioned on an input object
$\mathbf{x}$ (for example, image) for the $i$-th concept and outcome $j$ under
condition of the expert rules.

For brevity, we denote the marginal concept probabilities as vectors:
\begin{equation}
p^{(i)}=\left(  \mathop{\rm Pr}(C^{(i)}=1\mid \mathbf{x}),\dots
,\mathop{\rm Pr}(C^{(i)}=n_{i}\mid \mathbf{x})\right)  ,
\end{equation}
which is standard for classification. Since the marginal probabilities are not
independent because of the expert rules, we cannot estimate $p^{(i)}$
separately. Instead, the goal is to find the concatenated vector of marginal
concept probabilities:
\begin{equation}
\overline{p}=[(p^{(0)})^{T};\dots,(p^{(m)})^{T}]^{T}.
\end{equation}


\subsection{Joint distribution}

First, we consider the conditional joint probability distribution over
concepts: $\mathop{\rm Pr}(C=\mathbf{c}|X=\mathbf{x})$. Since all concept
random variables are discrete with finite outcome sets, all possible vectors
$\mathbf{c}$ can be enumerated. The total number of distinct concept vectors
is $t=\prod_{i=0}^{m}n_{i}$. Let us introduce a function $\mathcal{M}%
:\mathcal{C}^{\times}\mapsto \{1,\dots,t\}$, that maps the concept vector to
its number, and its inverse function $\mathcal{M}^{-1}$ that maps the number
to the concept vector. We define the joint probability distribution vector
$\pi=(\pi_{1},\dots,\pi_{t})$ as follows:%
\begin{equation}
\forall \mathbf{c}\in \mathcal{C}^{\times},\  \  \pi_{\mathcal{M}(\mathbf{c}%
)}=\mathop{\rm Pr}(C=\mathbf{c}|X=\mathbf{x}).
\end{equation}

\subsection{Reduction to a linear constraint}

The joint probability distribution is constrained to satisfy the expert rules
formulated as $g$, therefore:
\begin{equation}
\mathop{\rm Pr}(g(C)=1)=1.\label{eq:rule_proba_constraint}%
\end{equation}

Let us consider a binary mask of admissible states, a vector $u=(u_{1}%
,\dots,u_{t})\in \{0,1\}^{t}$, whose components are equal to $1$ if and only if
the rules are satisfied for the corresponding concept vectors:%
\begin{equation}
u_{k}=g(\mathcal{M}^{-1}(k)),\  \ k\in \{1,...,t\}.
\end{equation}

The constraint on the joint probability distribution
\eqref{eq:rule_proba_constraint} can be reformulated as a set of the equality
constraints on components of $\pi$, corresponding to invalid states (that
violate the rules):
\begin{equation}
\pi_{k}=0,\  \ k\in \{i\in \{1,...t\} \mid g(\mathcal{M}^{-1}%
(k))=0\}.\label{eq:eq_to_zero_system}%
\end{equation}

The vector $\pi$ also obeys the following probability distribution
constraints:
\begin{equation}
\pi \in \Delta_{t},
\end{equation}
so the system \eqref{eq:eq_to_zero_system} can be rewritten in more compact
form as one linear equality constraint:
\begin{equation}
u^{T}\pi=1.\label{eq:u_pi_one}%
\end{equation}

Here $\Delta_{t}$ is the unit simplex of dimension $t$.

To construct a feasible solution, a neural network should be able to generate
probability distributions matching equality constraint.

For illustrative purposes, we consider a toy example with two classes of
\textbf{birds}: a \emph{red-headed woodpecker}\ ($c^{(0)}=1$) and an
\emph{European green woodpecker}\ ($c^{(0)}=2$). The corresponding concepts
describing the birds are \textbf{head}\ ($c^{(1)}$), \textbf{bill
shape}\ ($c^{(2)}$), and \textbf{wing color}\ ($c^{(3)}$). Concept
\textbf{head}\ can take values: \emph{red}\ ($c^{(1)}=1$), \emph{green
}($c^{(1)}=2$), concept \textbf{bill shape }can take values: \emph{chisel
}($c^{(2)}=1$), \emph{dagger }($c^{(2)}=2$), \emph{all-purpose} \ ($c^{(2)}%
=3$). Here $m=2$, $n_{0}=2$, $n_{1}=2$, $n_{2}=3$. Suppose there is available
the following expert rule:
\begin{equation}%
\begin{array}
[c]{llll}%
\text{IF} & \text{ \textbf{head}} & \text{is} & \text{\emph{red}}\\
\text{AND} & \text{ \textbf{bill shape}} & \text{is} & \text{\emph{dagger} OR
\emph{all-purpose},}\\
\text{THEN} & \text{ \textbf{bird}} & \text{is} & \text{\emph{red-headed
woodpecker},}%
\end{array}
\end{equation}
or
\begin{equation}%
\begin{array}
[c]{ll}%
\text{IF} & c^{(1)}=1\text{ AND }c^{(2)}\in \{2,3\},\\
\text{THEN} & c^{(0)}=1\text{.}%
\end{array}
\end{equation}

This rule can be represented as follows:
\begin{align}
g(\mathbf{c})  &  =\left(  h_{1}^{(1)}\wedge \left(  h_{2}^{(2)}\vee
h_{3}^{(2)}\right)  \right)  \rightarrow h_{1}^{(0)}\nonumber \\
&  =h_{1}^{(0)}\vee \lnot\left(  h_{1}^{(1)}\wedge \left(  h_{2}%
^{(2)}\vee h_{3}^{(2)}\right)  \right) \nonumber \\
&  =h_{1}^{(0)}\vee \lnot h_{1}^{(1)}\vee \left(  \lnot h_{2}%
^{(2)}\wedge \lnot h_{3}^{(2)}\right) \nonumber \\
&  =h_{1}^{(0)}\vee h_{2}^{(1)}\vee h_{1}^{(2)}.\label{woodpecker_3}%
\end{align}

Table \ref{t:probab_comb} shows in bold all possible combinations of the
concept values satisfying the above expert rule. It can be seen from Table
\ref{t:probab_comb} that there holds:
\begin{equation}
\pi_{1}+\pi_{2}+\pi_{3}+\pi_{4}+\pi_{5}+\pi_{6}+\pi_{7}++\pi_{10}+\pi_{11}%
+\pi_{12}=1.
\end{equation}
%

\begin{table}[tbp] \centering
\caption{An example of combinations of the concept values and the
corresponding probabilities}%
\begin{tabular}
[c]{ccccccccccccc}\hline
$c^{(0)}$ & $\mathbf{1}$ & $\mathbf{1}$ & $\mathbf{1}$ & $\mathbf{1}$ &
$\mathbf{1}$ & $\mathbf{1}$ & $\mathbf{2}$ & $2$ & $2$ & $\mathbf{2}$ &
$\mathbf{2}$ & $\mathbf{2}$\\ \hline
$c^{(1)}$ & $\mathbf{1}$ & $\mathbf{1}$ & $\mathbf{1}$ & $\mathbf{2}$ &
$\mathbf{2}$ & $\mathbf{2}$ & $\mathbf{1}$ & $1$ & $1$ & $\mathbf{2}$ &
$\mathbf{2}$ & $\mathbf{2}$\\ \hline
$c^{(2)}$ & $\mathbf{1}$ & $\mathbf{2}$ & $\mathbf{3}$ & $\mathbf{1}$ &
$\mathbf{2}$ & $\mathbf{3}$ & $\mathbf{1}$ & $2$ & $3$ & $\mathbf{1}$ &
$\mathbf{2}$ & $\mathbf{3}$\\ \hline
$\mathbf{\pi}$ & $\pi_{1}$ & $\pi_{2}$ & $\pi_{3}$ & $\pi_{4}$ & $\pi_{5}$ &
$\pi_{6}$ & $\pi_{7}$ & $\pi_{8}$ & $\pi_{9}$ & $\pi_{10}$ & $\pi_{11}$ &
$\pi_{12}$\\ \hline
\end{tabular}
\label{t:probab_comb}%
\end{table}%

\subsection{Probabilistic approach}

An alternative approach is to consider the joint probability under condition
that expert rules are satisfied:
\begin{equation}
\mathop{\rm Pr} \left(  C=\mathbf{c}\mid g(C)=1\right)  =\frac{\mathop{\rm Pr}
\left(  C=\mathbf{c},~g(C)=1\right)  }{\mathop{\rm Pr} \left(  g(C)\right)  }.
\end{equation}

The probability of conjunction $\mathop{\rm Pr} \left(  C=\mathbf{c}%
,~g(C)=1\right)  $ can be expanded as
\begin{equation}
\mathop{\rm Pr}(C=\mathbf{c},g(C)=1)=\mathop{\rm Pr}(C=\mathbf{c}%
)\cdot \mathop{\rm Pr}(g(C)=1|C=\mathbf{c}),
\end{equation}
where $\mathop{\rm Pr}(C=\mathbf{c})$ is a predicted probability distribution
that may not satisfy the expert rules.

The posterior probability depends on the deterministic function $g$, thus
\begin{equation}
\mathop{\rm Pr}(g(C)=1\mid C=\mathbf{c})=%
\begin{cases}
1, & \text{if}~g(\mathbf{c})=1\\
0, & \text{else}.
\end{cases}
\end{equation}

Let us find the probability $\mathop{\rm Pr} \{g(C)\}$ as follows:%
\begin{equation}
\mathop{\rm Pr}(g(C))=\sum_{\mathbf{k}\in \mathcal{C}^{\times}}\mathbb{I}%
[g(\mathbf{k})]\cdot \mathop{\rm Pr}(C=\mathbf{k}).
\end{equation}

The prior joint concept probability can be calculated as an output of a neural
network. Let us denote the output $\widehat{\pi}$:
\begin{equation}
\mathop{\rm Pr}(C=\mathbf{c})=\widehat{\pi}_{\mathcal{M}(\mathbf{c})}.
\end{equation}

Hence, there holds%
\begin{equation}
\pi_{j}=\frac{\widehat{\pi}_{j}\cdot \mathbb{I}[g(\mathcal{M}^{-1}(j))]}%
{\sum_{\mathbf{k}\in \mathcal{C}^{\times}}\widehat{\pi}_{k}\cdot \mathbb{I}%
[g(\mathcal{M}^{-1}(k))]}.
\end{equation}

In sum, this approach produces a mask for admissible probabilities $\pi
_{1},...,\pi_{N}$. The mask is the same as $u$ in \eqref{eq:u_pi_one}. This
approach requires to predict all components of the joint probability
distribution by applying a neural network, while only admissible states will
be used. So, it is quite redundant, and this motivates us not to use the
approach in practice. However, it is flexible and can be useful, for example,
in case when multiple conflicting expert rules are applied to different parts
of one dataset, or when the choice of expert rules depends on input.

\subsection{Solution set as a polytope}

Instead of modelling the whole joint distribution, one can estimate
probabilities only of the states that lead to satisfying the expert rules. We
call this reduced vector as the \textquotedblleft admissible probability
vector\textquotedblright \ and denote as $\tilde{\pi}=(\tilde{\pi}_{1}%
,\dots,\tilde{\pi}_{d})$, where $d$ is the number of admissible states. There
is no additional constraints on $\tilde{\pi}$, therefore we can say that it
belongs to the unit simplex of dimension $d$.

The joint probability vector can be found as
\begin{equation}
\pi=W\tilde{\pi},\label{eq:pi_W_tilde_pi}%
\end{equation}
where $W\in \{0,1\}^{t\times d}$ is a placement matrix which contains strictly
one non-zero element in each column and one or zero non-zero elements in each
row. It can be interpreted as arrangement of $\tilde{\pi}$ entries to the
admissible components of $\pi$.

The desired marginal concept probabilities can be calculated as a summation
over relevant entries of $\pi$:
\begin{equation}
\mathop{\rm Pr}(C^{(i)}=j\mid \mathbf{x})=\sum_{c\in \mathcal{C}^{\times}}%
\pi_{\mathcal{M}(c)}\cdot \mathbb{I}[c^{(i)}=j].
\end{equation}

So each vector $p^{(i)}$ can be represented as
\begin{equation}
p^{(i)}=B^{(i)}~\pi=B^{(i)}W\tilde{\pi},\label{eq:p_B_pi}%
\end{equation}
where $B_{jk}^{(i)}=\mathbb{I}\left[  \left(  \mathcal{M}^{-1}(k)\right)
^{(i)}=j\right]  $.

Then every solution satisfying rules can be expressed as
\begin{equation}
\begin{gathered} \overline{p} = V \tilde \pi, \\ V = [(B^{(0)} W)^T; \dots; (B^{(m)} W)^T]^T, \\ \tilde \pi \succcurlyeq 0, ~ \mathbf{1}^T \tilde \pi = 1. \end{gathered}\label{eq:V_def}%
\end{equation}
Therefore the solution set is by definition a polytope whose vertices are
columns of the matrix $V$.

In practice, this approach can be used as follows. First, all possible concept
vectors are enumerated and passed through the expert rules, represented as $g$
to obtain the mapping $W$ to the admissible states. Then $\tilde{\pi}$ is
obtained as an output of a neural network after applying the \emph{softmax}
operation, that is $\tilde{\pi}$ is formally a discrete probability
distribution. Then the solution is a point inside the polytope, which is
calculated by weighing columns of the vertex matrix $V$ with elements of
$\tilde{\pi}$.

The main disadvantage of this approach is that we have to pre-calculate and
store all vertices, while their amount can be enormous for complex logical
expressions on multiple concepts.

\subsection{Linear inequality system}

The solution set is a polytope with a possibly high number of vertices. We
discover an alternative definition of this set as an intersection of
half-spaces, determined by hyperplanes, the so-called H-representation.
Instead of converting from V- to H-representation after calculating vertices,
we construct a linear inequality system from scratch based only on $g$.

The algorithm for constructing the linear inequality system consists of three steps:

\begin{enumerate}
\item Convert $g$ into conjunctive normal form (CNF).

\item Map each clause to exactly one linear inequality.

\item Unify (intersect) clause's linear inequalities into one system, along
with the probability distribution constraints on marginals.
\end{enumerate}

The first step is NP-complete in a general case, but can be solved in
reasonable time in many practical applications. We stress here that the
algorithm is appropriate when the expert rules can be converted to a compact
CNF. Let the rule set be represented in CNF as a conjunction of clauses:
\begin{equation}
\mathcal{R}=\mathcal{K}_{1}\wedge \dots \wedge \mathcal{K}_{b}.
\end{equation}

Each clause $\mathcal{K}_{l}$ is a disjunction of literals:
\begin{equation}
\mathcal{K}_{l}=\bigvee_{q\in K_{l}}l_{q},
\end{equation}
where $q=(i,j)$, $l_{q}=h_{j}^{(i)}$ for some set of literals $K_{l}$ of the
clause. Note, that if the clause contains some negated literals like $\lnot
h_{j}^{(i)}$, they are replaced with the disjunction of the rest outcome
literals for the $i$-th concept:
\begin{equation}
\lnot h_{j}^{(i)}\equiv \bigvee_{k\in \mathcal{C}^{(i)}\setminus \{j\}}%
h_{k}^{(i)}.
\end{equation}

Further, we assume that such the transformation was applied to all clauses,
and then each clause does not contain negations.

The next steps can be applied to any boolean expressions in CNF. Let us
describe them in detail. First, consider a clause $\mathcal{K}$, whose
literals are like $h_{j}^{(i)}$. The goal is to find appropriate marginals
$\overline{p}$, for which some corresponding joint probability distribution
satisfying the clause exists. Formally, given a clause of the form:
\begin{equation}
\mathcal{K}=\bigvee_{q\in K}l_{q},
\end{equation}
the sum of probabilities of dependent literals has a lower bound:
\begin{equation}
\mathop{\rm Pr}(\mathcal{K})=1\implies \sum_{q}\mathop{\rm Pr}(l_{q}%
)\geq \mathop{\rm Pr}(\mathcal{K})=1.
\end{equation}

This property can be used to formulate a constraint of marginal probabilities.
For the clause $\mathcal{K}$, the constraint is
\begin{equation}
\sum_{(i,j)\in K}p_{j}^{(i)}\geq1.
\end{equation}

The lower bound is tight in a sense that there exist feasible marginals that
sum up to one. Moreover, no other constraints (except the probability
distribution constraints on marginals) restrict the feasible set. It means
that, for any solution $\overline{p}$ satisfying the constraint (the sum of
different marginals probabilities is not less than $1$), there exists at least
one joint probability distribution matching rules that have the same marginal
probability distributions.

We apply this transformation to each clause to obtain one linear inequality
constraint per clause. The last step is to correctly merge the constraints of
the clauses into one system. Hopefully, it can be achieved by intersecting the
obtained inequalities.

\begin{theorem}
Given a rule $\mathcal{R}$ in CNF, consisting of $b$ clauses:
\begin{equation}
\mathcal{R}=\mathcal{K}_{1}\wedge \dots \wedge \mathcal{K}_{b},
\end{equation}
the constraint on the expert rule probability is equivalent to intersection of
clause constraints:
\begin{equation}
\mathop{\rm Pr}(\mathcal{R})=1\iff%
\begin{cases}
\mathop{\rm Pr}(\mathcal{K}_{1})=1,\\
\dots \\
\mathop{\rm Pr}(\mathcal{K}_{b})=1.
\end{cases}
\end{equation}

\end{theorem}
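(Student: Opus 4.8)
The plan is to prove the two implications of the stated equivalence separately, working throughout with the single fixed conditional joint distribution $\Pr(C=\mathbf{c}\mid X=\mathbf{x})$ over the finite outcome space $\mathcal{C}^{\times}$. I would first recast each clause $\mathcal{K}_{l}$ and the full rule $\mathcal{R}$ as events, namely $\{\mathbf{c}:\mathcal{K}_{l}(\mathbf{c})=1\}$ and $\{\mathbf{c}:\mathcal{R}(\mathbf{c})=1\}$, so that $\Pr(\mathcal{K}_{l})$ and $\Pr(\mathcal{R})$ are genuine probabilities lying in $[0,1]$ and the logical operations $\wedge$, $\vee$, $\lnot$ become the set operations $\cap$, $\cup$, and complement. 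With this reading the claim becomes a purely measure-theoretic statement, and both directions follow from elementary properties of probability.

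For the forward direction ($\Rightarrow$), I would invoke the logical fact that a conjunction entails each conjunct: whenever $\mathcal{R}(\mathbf{c})=1$ we necessarily have $\mathcal{K}_{l}(\mathbf{c})=1$ for every $l$. Hence the event $\{\mathcal{R}=1\}$ is contained in each event $\{\mathcal{K}_{l}=1\}$, and monotonicity of probability yields $\Pr(\mathcal{K}_{l})\ge\Pr(\mathcal{R})=1$. Since no probability exceeds $1$, this forces $\Pr(\mathcal{K}_{l})=1$ for all $l$.

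For the backward direction ($\Leftarrow$), I would pass to complements through De Morgan's law, $\lnot\mathcal{R}=\lnot\mathcal{K}_{1}\vee\dots\vee\lnot\mathcal{K}_{b}$, and then apply finite subadditivity (the union bound):
\[
\Pr(\lnot\mathcal{R})=\Pr\!\left(\bigcup_{l=1}^{b}\{\lnot\mathcal{K}_{l}\}\right)\le\sum_{l=1}^{b}\Pr(\lnot\mathcal{K}_{l})=\sum_{l=1}^{b}\bigl(1-\Pr(\mathcal{K}_{l})\bigr)=0.
\]
Consequently $\Pr(\mathcal{R})=1-\Pr(\lnot\mathcal{R})=1$, which completes the equivalence.

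I do not anticipate a genuine technical obstacle, since each direction collapses to a standard fact (monotonicity and finite subadditivity) once the clauses are interpreted as events. The only point requiring care is conceptual rather than computational: the statement fixes one joint distribution and compares $\Pr(\mathcal{R})$ with the $\Pr(\mathcal{K}_{l})$ for that same distribution, so the equivalence must be understood at the joint level and kept distinct from the earlier per-clause passage to marginals. This theorem is precisely what legitimizes replacing the single constraint $\Pr(\mathcal{R})=1$ by the system $\{\Pr(\mathcal{K}_{l})=1\}_{l=1}^{b}$; each resulting equality $\Pr(\mathcal{K}_{l})=1$ may then be translated into its tight marginal inequality $\sum_{(i,j)\in K_{l}}p_{j}^{(i)}\ge1$ by the argument already established before the theorem.
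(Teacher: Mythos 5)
Your proof is correct and takes essentially the same route as the paper: the sufficiency direction is the identical De Morgan plus union-bound argument, and your necessity argument via event inclusion $\{\mathcal{R}=1\}\subseteq\{\mathcal{K}_{l}=1\}$ and monotonicity is just a cleaner (and slightly more careful) phrasing of the paper's factorization $\Pr(\mathcal{R})=\Pr(\mathcal{K}_{j})\cdot\Pr(\bigwedge_{i\neq j}\mathcal{K}_{i}\mid\mathcal{K}_{j})=1$, which encodes the same inclusion while tacitly assuming the conditional probability is defined.
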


\begin{proof}
Necessity.
\begin{equation*}
\Pr(\mathcal{R}) = 1 \implies \forall j\in \overline{1,b} ~~ \Pr(\mathcal{K}_j) \cdot
\Pr(\bigwedge_{i\neq j} \mathcal{K}_i | \mathcal{K}_j) = 1
\implies
\Pr(\mathcal{K}_j) = 1.
\end{equation*}
Sufficiency.
	\begin{align}
	\forall j\in \overline{1, b}  ~~  \Pr(\mathcal{K}_j) = 1 \iff \Pr(\overline{\mathcal{K}_j}) = 0
	\implies \\
	0 = \sum_{i=1}^{b} \Pr(\overline{\mathcal{K}_i}) \ge
\Pr(\overline{\mathcal{K}_1} \lor \dots \lor \overline{\mathcal{K}_b}) =
	 \Pr(\overline{\mathcal{K}_1 \land \dots \land \mathcal{K}_b}) \ge 0
	 \implies \\
	 0 = \Pr(\overline{\mathcal{K}_1 \land \dots \land \mathcal{K}_b}) =
	 1 - \Pr(\mathcal{K}_1\land \cdots \land \mathcal{K}_b)
	 \implies \\
	 \Pr(\bigwedge_{i=1}^{b} \mathcal{K}_i) = 1.
	\end{align}
\end{proof}

Finally, according to the proposition and the theorem, when only marginal
distributions are of interest, any set of expert rules can be equivalently
transformed to a linear inequalities system of the form:
\begin{equation}%
\begin{cases}
\sum_{(i,j)\in K_{1}}p_{j}^{(i)}\geq1\\
\dots \\
\sum_{(i,j)\in K_{b}}p_{j}^{(i)}\geq1,
\end{cases}
\end{equation}
where $K_{r}$ is a set of concept-values pairs for a clause $r$ of the set of
rules in CNF. Lets call the matrix of the system as $\hat{A}$, thus the system
is:
\begin{equation}
\hat{A}~\overline{p}\geq \mathbf{1}.
\end{equation}

The entire system of constraints for marginal distributions includes also the
probability distribution constraints:
\begin{equation}
p^{(i)}\in \Delta_{n_{i}},
\end{equation}
and becomes:
\begin{align}
A~\overline{p}  &  \geq b,\label{28}\\
Q~\overline{p}  &  =\mathbf{1},
\end{align}
where
\begin{equation}
A=%
\begin{bmatrix}
\hat{A}\\
E
\end{bmatrix}
,~~b=%
\begin{bmatrix}
\mathbf{1}\\
\mathbf{0}%
\end{bmatrix}
,
\end{equation}%
\begin{equation}
Q=%
\begin{bmatrix}
1\dots1 & 0\dots0 & 0\dots0\\
0\dots0 & 1\dots1~ & 0\dots0\\
& \ddots & \\
0\dots0 & 0\dots0~ & 1\dots1\\
&  &
\end{bmatrix}
\in \mathbb{R}^{m+1\times s}.
\end{equation}

Note that dimensionality of the linear inequality system may be reduced from
$s$ to $(s-m)$ along with elimination of equality constraints because, for
each concept, strictly one entry can be removed by using the condition
$p_{1}^{(i)}=1-\sum_{j\neq1}p_{j}^{(i)}$.

\section{Neural network and expert rules}

Consider a partially-labeled multi-label multi-class classification problem.
The training dataset $\mathcal{D}$ consists of $N$ tuples $(\mathbf{x}%
_{j},\zeta_{j}^{(0)},\dots,\zeta_{j}^{(m)})$, where $\zeta_{j}^{(i)}%
\in \mathcal{C}^{(i)}\cup \{-1\}$ is a label of the $i$-th concept of the $j$-th
training observation. The label $\zeta_{j}^{(i)}$ is assigned to $-1$ if it is
unknown for the $j$-th observation. The main target $y_{j}$ is denoted as the
$0$-th concept $\zeta_{j}^{(0)}$ and can also be partially labeled, that is
$\zeta_{j}^{(0)}$ can be equal to $-1$.

We consider neural networks that simultaneously predict the marginal
distribution for each concept. For the $i$-th concept, the prediction mapping
is denoted as
\begin{equation}
f^{(i)}:\mathcal{X}\mapsto \Delta_{n_{i}},
\end{equation}
however, a neural network $f_{\theta}$ with parameters $\theta$ computes
$f_{\theta}^{(0)},\dots,f_{\theta}^{(m)}$ simultaneously.

The training loss function is a weighted sum of the masked cross entropy
losses over each concept
\begin{align}
\mathcal{L}  &  =\sum_{i=0}^{m}\omega^{(i)}\cdot \mathcal{L}^{(i)},\\
\mathcal{L}^{(i)}  &  =-\sum_{j=1}^{N}\mathbb{I}\left[  \zeta_{j}^{(i)}%
\neq-1\right]  \cdot \left(  \sum_{k=1}^{n_{i}}\mathbb{I}[\zeta_{j}%
^{(i)}=k]\cdot \log{(f^{(i)}(\mathbf{x}))_{k}}\right)  ,
\end{align}
where $\omega^{(i)}$ is a weight of the $i$-th concept loss, which is an
inverse number of labeled concept samples by default. The summation in
brackets is the log-likelihood for the $i$-th concept.

The neural network consists of classical layers (fully-connected or
convolutional layers, depending on a solved problem) with one special layer at
the end of the neural network, which we call as a \textbf{concept head}. This
layer maps an embedding produced by the preceding layers to the marginal class
probabilities and guarantees that they will satisfy expert rules for any
input. The approaches described above can be used to construct different
concept heads. Let us consider them in detail.

\subsection{Base head}

The most simple concept head implementation is to calculate prior joint
probability distribution vector $\widehat{\pi}$ using \emph{softmax} applied
to a linear layer that maps embedding to $t$ logits. Then, to satisfy the
expert rules, the posterior joint probability conditioned on the rules is
calculated by multiplying of admissible states by the mask $u$, and
renormalized. This approach does extra computation when calculating
probabilities of invalid states.

The Base Head approach is schematically shown in the top picture in Fig.
\ref{f:concept_2}.

\subsection{Admissible state head}

The idea of the Admissible State head (\textbf{AS-head}) is to estimate the
probability distribution on \textbf{only admissible} outcomes $\tilde{\pi}$
instead of the whole joint probability vector $\pi$. The full joint
probability vector is constructed using \eqref{eq:pi_W_tilde_pi}, and the
marginals are calculated by using the matrix multiplication \eqref{eq:p_B_pi}.
In a software implementation, we obtain marginal probabilities $p^{(i)}$ by
reshaping the flat vector $\pi$ into a multi-dimensional array with dimensions
$(n_{0},\dots,n_{m})$, and then summing up over all dimensions except $i$.
Formally, it is equivalent to \eqref{eq:p_B_pi}.

To obtain the placement matrix $W$ and the dimension $d$ of $\tilde{\pi}$,
before constructing such the layer, one needs to enumerate all possible joint
outcomes and evaluate the expert rules on them. It is not a problem for a
small number of concepts, when enumeration can be carried out in a reasonable
time. But it can be a problem when the number of concepts and their outcomes
is large enough. Therefore, additional optimizations are required in this case.%

\begin{figure}
[ptb]
\begin{center}
\includegraphics[
height=4.7183in,
width=4.0165in
]%
{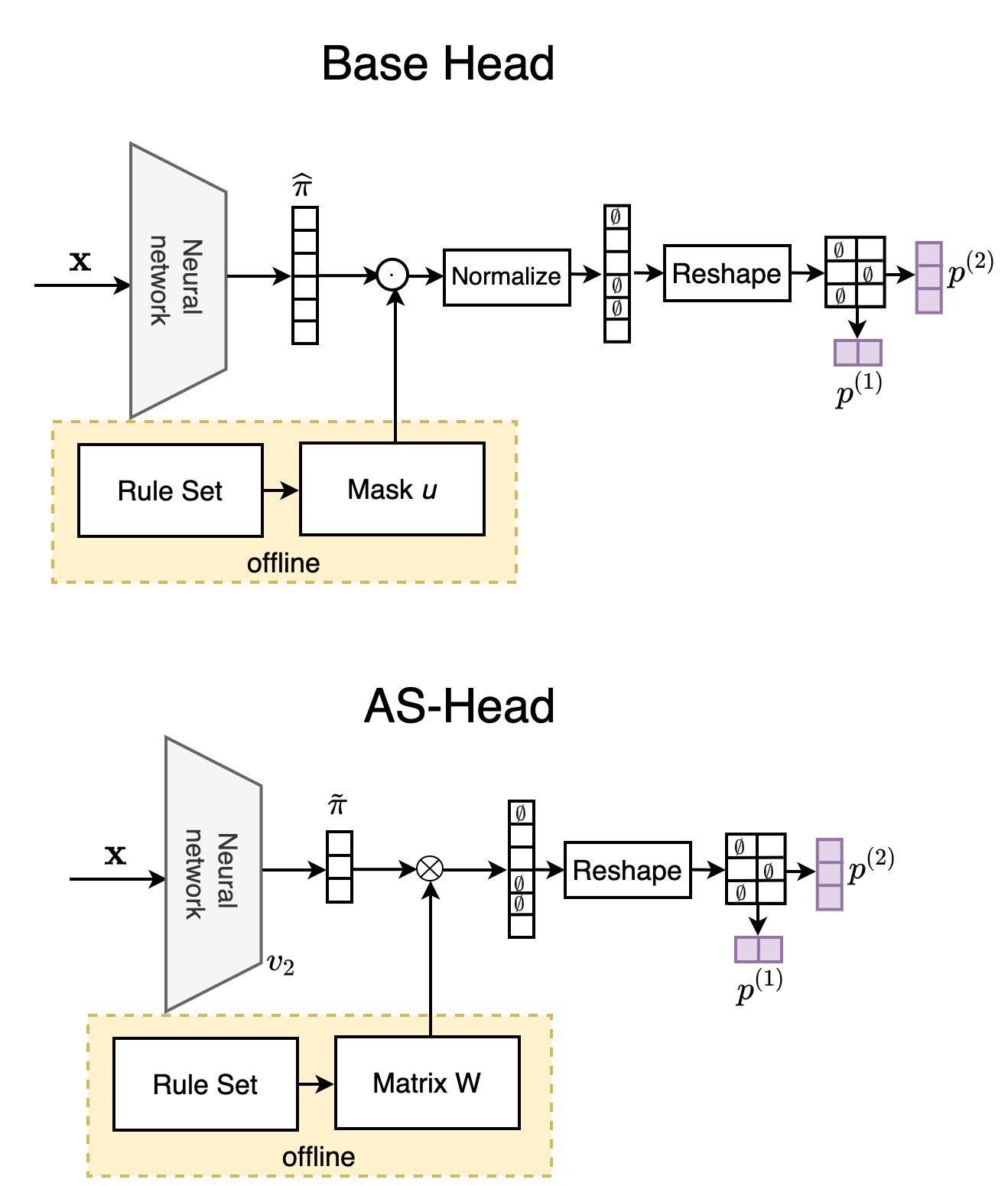}%
\caption{The Base Head (the top picture) and the AS-Head (the bottom picture)
approaches}%
\label{f:concept_2}%
\end{center}
\end{figure}

The bottom picture in Fig. \ref{f:concept_2} illustrates the Admissible State
Head approach.

\subsection{Vertex-based head}

The placement matrix $W$ is of dimension $t\times d$, where $t$ is the total
number of joint outcomes and $d$ is the number of valid states. Even if $d$
remains small, $t$ grows exponentially with the number of concepts $m$. Since
the main goal is to compute only marginal distributions using
\eqref{eq:V_def}, one can precompute the polytope vertex matrix $V$ of
dimension $s\times d$, where $s=\sum_{i=0}^{m}n_{i}$ is the dimension of the
vector $\overline{p}$ of concatenated marginal probabilities.

The computation of $V$ is carried out offline, before training a neural
network. Total number of operations at training or inference is reduced in
this case when dense or sparse matrices are used for storing $W,V$. If dense
matrices are used, then the number of operations is reduced exponentially
w.r.t. $m$ compared to the approaches described above.

The left picture in Fig. \ref{f:concept_1} illustrates a scheme of the Vertex
Head approach. The neural network generates the vector $\widetilde{\pi}$ in
the unit simplex, which is multiplied by the simplex vertices.

Another way for implementing the vertex-based approach is to first construct
the linear inequality system which defines the polytope of feasible solutions
$\overline{p}$. Then the vertices $V$ can be found via H- to V-representation
conversion \cite{fukuda2008exact}.

\subsection{Constraints head}

Alternatively, the solution $\overline{p}$ can be generated inside the
polytope defined by the linear inequality constraints, in H-representation
without estimation of vertices. The vertex-based head is an approach to
generate a point in a polytope by multiplying its vertices by weights from
\emph{softmax}. However there are other methods for such a problem, considered
in \cite{Konstantinov-Utkin-23f}. These methods require one feasible point as
an input that is strictly inside the polytope and can map an input embedding
into a polytope point. The methods have the computational complexity
$O(\nu \cdot \mu)$, where $\nu$ is the number of inequality constraints, $\mu$
is the output dimension.

The right picture in Fig. \ref{f:concept_1} illustrates a scheme of the
Constraints Head approach.

The main advantage of this approach is that it can be applied even when the
heads described above fail: the number of admissible states is enormous, the
matrix $V$ cannot be computed in a reasonable time, or uses too much memory,
in a couple with the matrix constructed from weights of the preceding linear
layer. It is because the number of inequality constraints $b$ (the number of
clauses in CNF) can be relatively small comparing to the number of admissible
joint outcomes $d$. The computational complexity of the point construction
inside the polytope, defined by the inequality constraints (\ref{28}), at
inference is $O\left(  (b+s)\cdot s\right)  $ under condition that at least
one point inside the polytope is found in advance. Therefore, it is reasonable
to apply this layer only when $d\gg b+s$.%

\begin{figure}
[ptb]
\begin{center}
\includegraphics[
height=2.7035in,
width=6.1651in
]%
{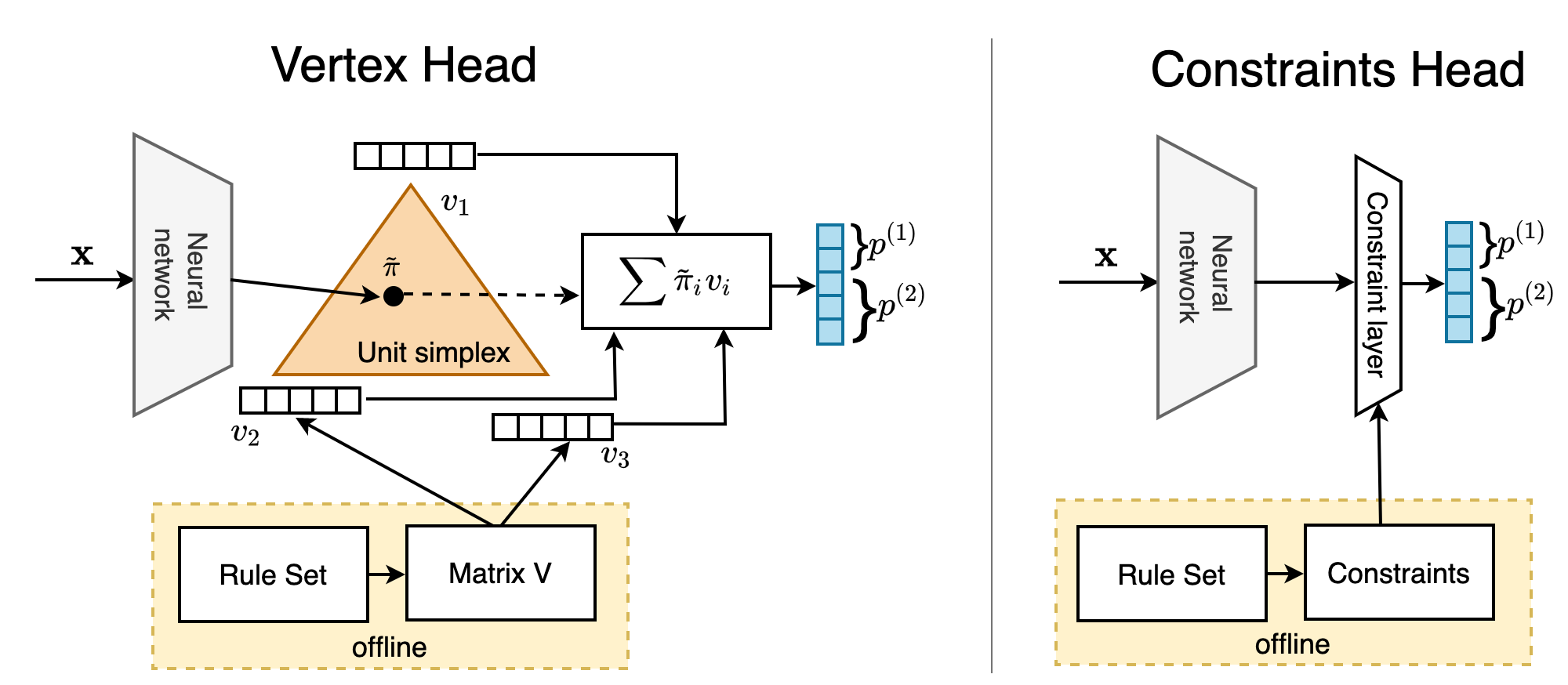}%
\caption{The Vetex Head (the left picture) and Constraints Head (the right
picture) approaches}%
\label{f:concept_1}%
\end{center}
\end{figure}

\subsection{State space reduction}

The number of enumeration steps, vertices or half-planes, depending on the chosen approach from the described above, can be strongly reduced by considering only concept values that are mentioned in expert rules. For this case, we construct
first separate concepts, that were not mentioned in the rules at all, and use
separate classification heads for them.

Other concepts, which are partially mentioned in the expert rules, can be
compressed. For this, we consider all values that were not mentioned in the
rules and incorporate them into a special $0$ outcome. For example, if rules
are based only on literals $h_{1}^{(1)},h_{2}^{(3)},h_{4}^{(3)}$, and
$\mathcal{C}^{(1)}=\{1,2\},\mathcal{C}^{(3)}=\{1,2,3,4\}$, then the compressed
outcome sets will be $\tilde{\mathcal{C}}^{(1)}=\{0,1\},\tilde{\mathcal{C}%
}^{(3)}=\{0,2,4\}$. Here for the first concept $c^{(1)}$, the outcome $2$ is
not used in the rules, therefore, we replace it with the artificial $0$
outcome. For the third concept $c^{(3)}$, outcomes $\{1,3\}$ are not used and
replaced with the outcome $0$.

After such the transformation, the total number of the joint distribution
outcomes is much less than the initial one. To infer probabilities of
compressed outcomes, we construct the additional classification heads that
estimate the probability distribution over outcomes which were replaced with
$0$, for each concept, that was partially mentioned in rules and has at least
two values for replacement. The final probability of compressed outcomes is
calculated as a multiplication of $0$ outcome probability by estimated
probabilities of replaced outcomes. In the above example, we have:
\begin{align}
&  \mathop{\rm Pr}(C^{(3)}=1)=\mathop{\rm Pr}_{\text{comp}}(C^{(3)}%
=0)\cdot \mathop{\rm Pr}_{\text{repl}}(C^{(3)}=1),\\
&  \mathop{\rm Pr}(C^{(3)}=2)=\mathop{\rm Pr}_{\text{comp}}(C^{(3)}=2),\\
&  \mathop{\rm Pr}(C^{(3)}=3)=\mathop{\rm Pr}_{\text{comp}}(C^{(3)}%
=0)\cdot \mathop{\rm Pr}_{\text{repl}}(C^{(3)}=3),\\
&  \mathop{\rm Pr}(C^{(3)}=4)=\mathop{\rm Pr}_{\text{comp}}(C^{(3)}=4),
\end{align}
where $\mathop{\rm Pr}_{\text{comp}}$ are the compressed probabilities,
$\mathop{\rm Pr}_{\text{repl}}$ are probabilities of replaced values estimated
by the separate classification heads.

\section{Numerical experiments}

\subsection{A toy example}

The first example is entirely synthetic. Two-dimensional input vectors are
randomly generated in the square $[0,1]^{2}$. The concepts are: $y=c^{(0)}%
\in \mathcal{C}^{(0)}=\{1,2\}$, $\mathcal{C}^{(1)}=\{1,2\}$, $\mathcal{C}%
^{(2)}=\{1,2,3\}$, $\mathcal{C}^{(3)}=\{1,2,3\}$.

Concepts used in the example are illustrated in Fig. \ref{f:toy_data}. In
particular, the first concept $c^{(1)}$ is equal to $2$ at the right from the
middle, and to $1$ at the left. The second concept $c^{(2)}$ is equal to $1$
at the bottom horizontal stripe of height $0.25$, to $2$ at the middle
horizontal stripe of height $0.5$ and to $3$ at the top stripe. The third
concept is like the second, but in the \textquotedblleft L\textquotedblright%
-shape, that is it depends on both features $x^{(1)}$ and $x^{(2)}$. The main
target $y$ is equal to $2$ if and only if $c^{(1)}=c^{(2)}=2$.%

\begin{figure}
[ptb]
\begin{center}
\includegraphics[
height=3in,
]%
{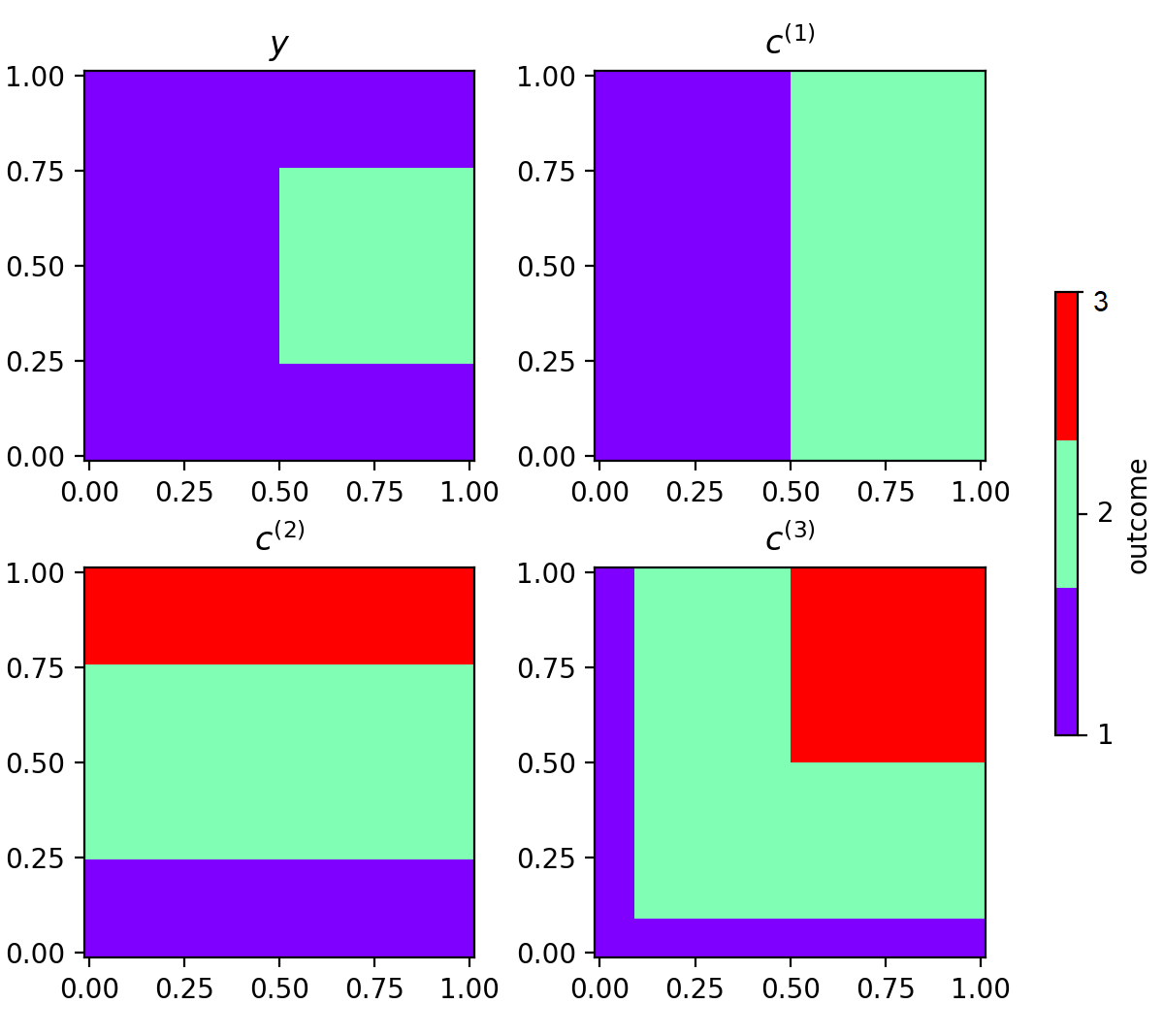}%
\caption{Concepts used in the toy synthetic example}%
\label{f:toy_data}%
\end{center}
\end{figure}

For example, let us consider the rule $g(\mathbf{c})=((c^{(1)}=2)\wedge
(c^{(2)}=2))\rightarrow(y=2)$ which is correct for the dataset. It can be
expressed with literals as $g(\mathbf{c})=(h_{2}^{(1)}\wedge h_{2}%
^{(2)})\rightarrow h_{2}^{(0)}$.

First, we train the model on the dataset completely without $y$ labels, i.e.
$\zeta_{k}^{(0)}=-1$ for each sample $k$. The model is able to reconstruct $y
$ by using the rules. The predicted probabilities are shown in Fig.
\ref{f:toy_probas_implication}. Second, we train the model for the same
dataset but with \textquotedblleft if and only if\textquotedblright \ rule
$g(\mathbf{c})=h_{2}^{(0)}\leftrightarrow(h_{2}^{(1)}\wedge h_{2}^{(2)})$. The
predicted probabilities are shown in Fig. \ref{f:toy_probas}. Note, that the
shape of $\mathop{\rm Pr}(y=2)$ is fully determined by the predicted
$\mathop{\rm Pr}(c^{(1)}=2)$ and $\mathop{\rm Pr}(c^{(2)}=2)$, while
$\mathop{\rm Pr}(c^{(3)}=2)$ does not affect $y$.%

\begin{figure}
[ptb]
\begin{center}
\includegraphics[
height=3in,
]%
{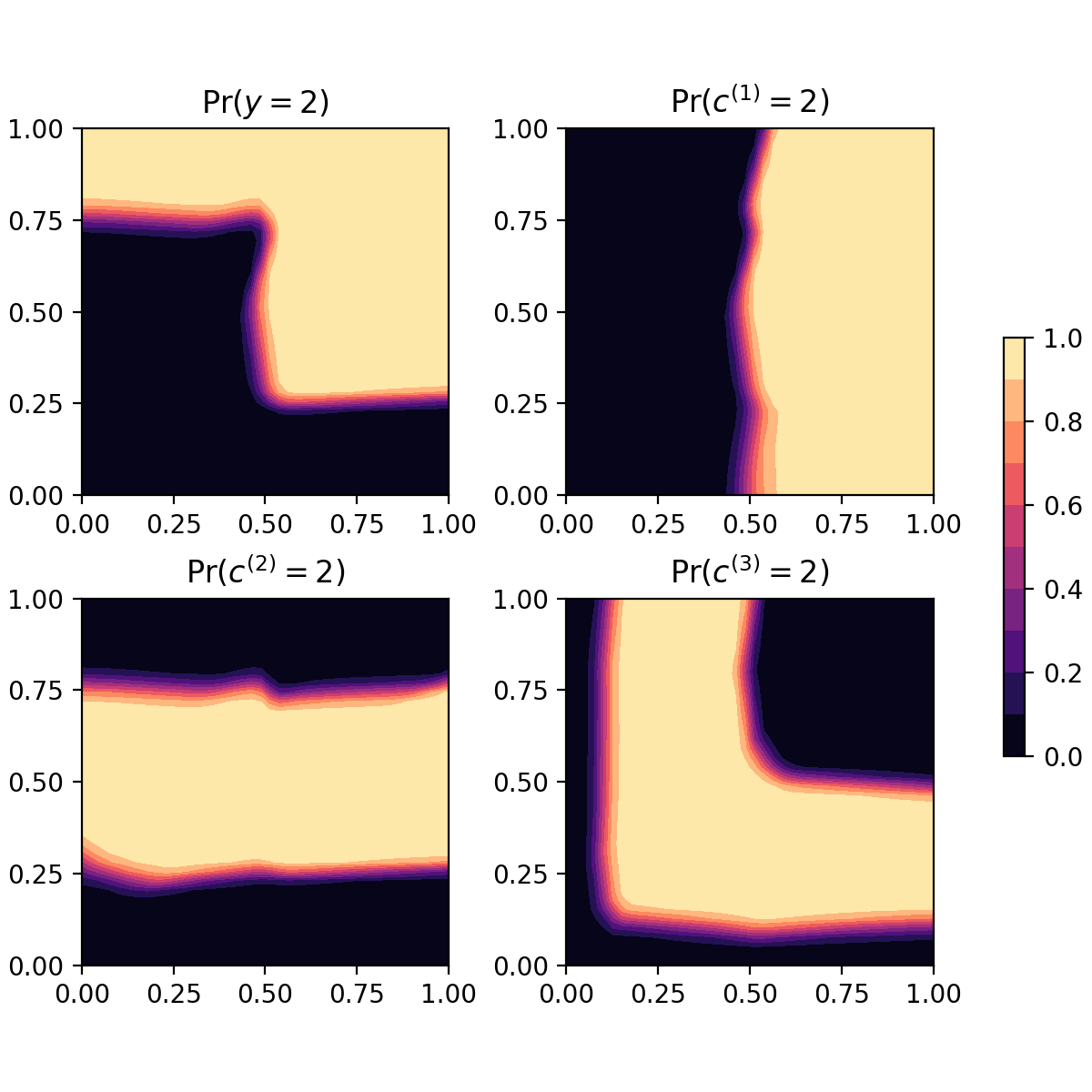}%
\caption{Predicted probabilities with the implication rule}%
\label{f:toy_probas_implication}%
\end{center}
\end{figure}
%

\begin{figure}
[ptb]
\begin{center}
\includegraphics[
height=3in,
]%
{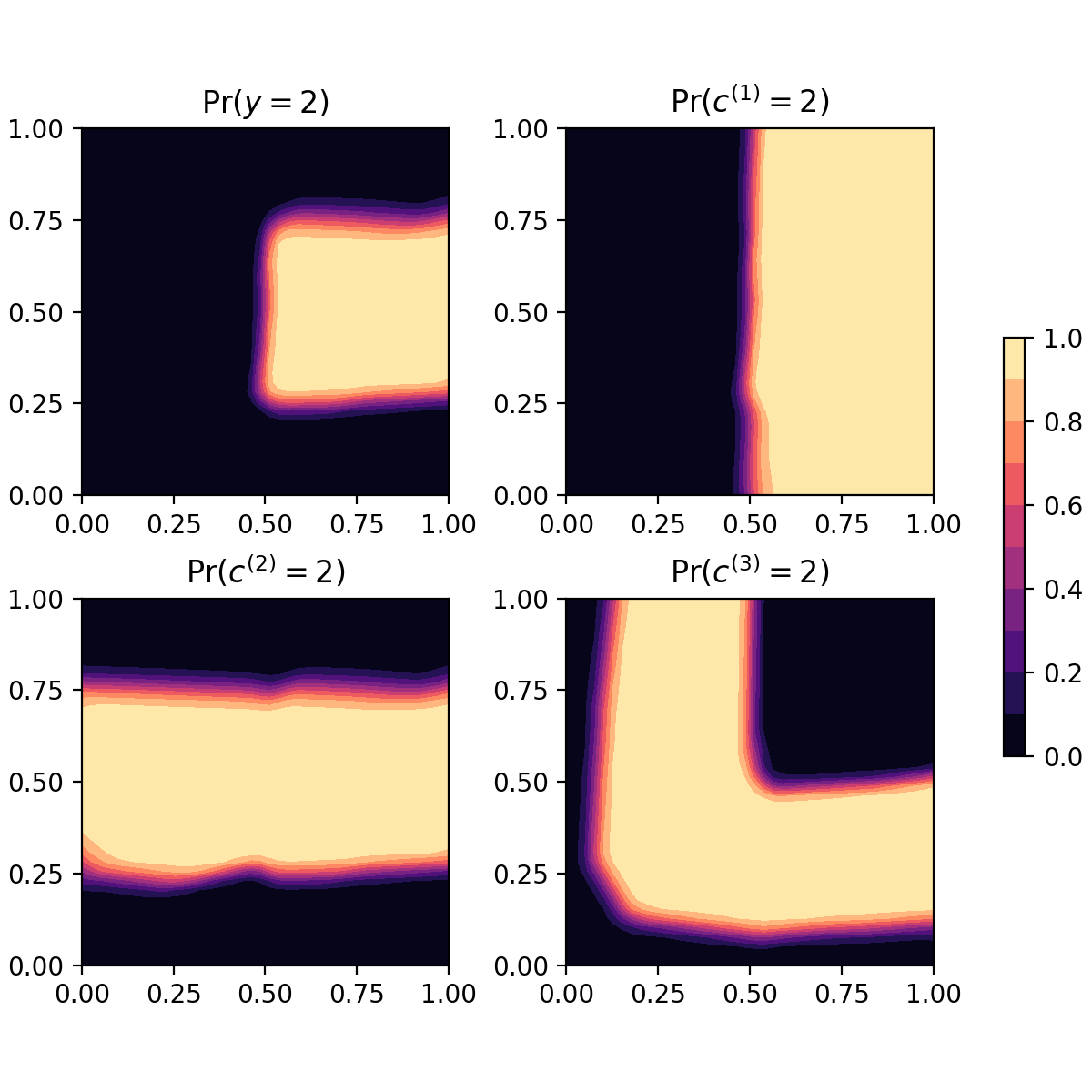}%
\caption{Predicted probabilities with the \textquotedblleft if and only
if\textquotedblright \ rule}%
\label{f:toy_probas}%
\end{center}
\end{figure}

\subsection{Multi-label MNIST}

The second example is with an artificial dataset constructed on a part of the
real labeled handwritten digits image dataset MNIST consisting of 5000
randomly selected images. We consider digit labels as the concept $c^{(1)}$
(not the main target). An additional synthetic feature is the digit color.
Each digit is randomly colored in white or blue corresponding to $c^{(2)}=1$
or $c^{(2)}=2$, respectively. The main label $y=c^{(0)}$ is defined as the
following: odd blue digits or even white digits are assigned to $y=1$, the
rest are assigned to $y=2$.

We compare three different types of heads: AS-Head, Joint Distribution Head
(AS-Head without expert rules) and Independent Classification Heads without
rules. The Independent Classification Heads model differs from the first two
models. It considers concepts as independent targets (classes) and predicts
probabilities for the targets separately. The first head is given the same
rule that was used for constructing labels for $y$. The second head predicts a
joint distribution and then calculates marginals from it. The third head, a
baseline, is a plain multi-label multi-class classification head, where
probabilities for each head are predicted independently. Results are shown in
Fig. \ref{f:mnist_5000_samples}, where the $F_{1}$ measure as a function of
the labeled data ratio is provided. The higher the labeled data fraction, the
easier is the task, therefore all three curves almost coincide at the fraction
of $0.5$. However when only a small amount of labeled data is available, the
proposed AS-Head, that takes expert rules into account, performs
systematically better than the baseline.%

\begin{figure}
[ptb]
\begin{center}
\includegraphics[
height=1.7396in,
]%
{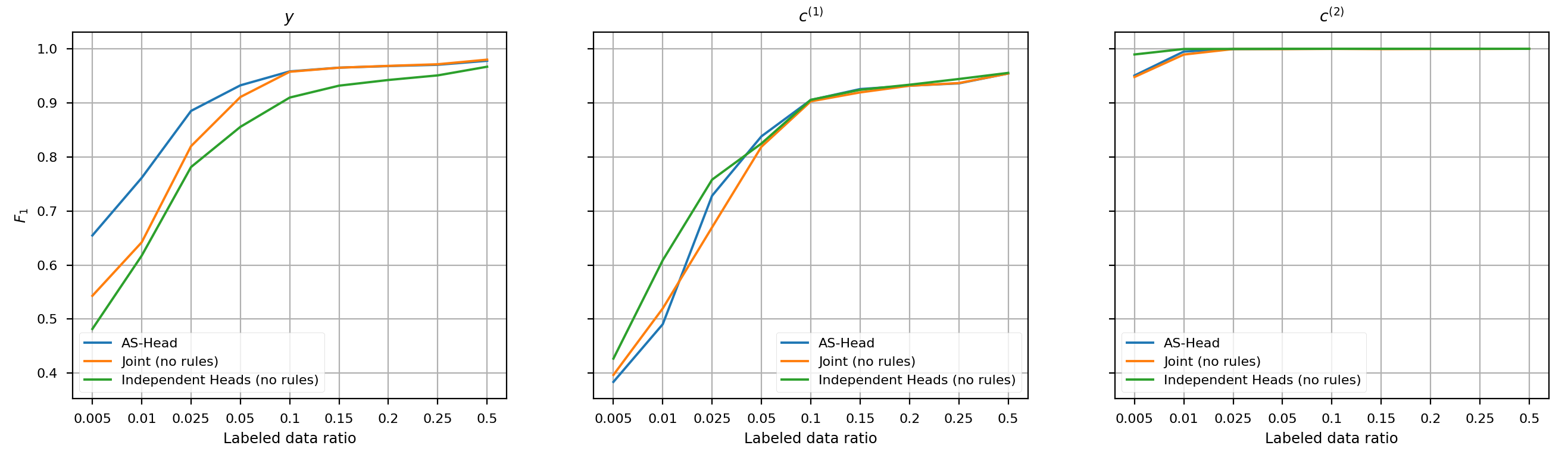}%
\caption{Test performance ($F_{1}$) depending on the labeled data ratio}%
\label{f:mnist_5000_samples}%
\end{center}
\end{figure}

\section{Conclusion}

We formulated the problem of incorporating the expert rules into machine
learning models for extending the concept-based learning for the first time.
We have shown how to combine logical rules and neural networks predicting the
concept probabilities. Several approaches have been proposed in order to solve
the stated problem and to implement the idea behind the use of expert rules in
machine learning. The proposed approaches have provided ways of constructing
and training a neural network which guarantees that the output probabilities
of concepts satisfy the expert rules. These ways are based on representing
sets of possible probability distributions of concepts by means of a convex
polytope such that the use of its vertices or its faces allows the neural
network to generate a probability distribution of concepts satisfying the
expert rules.

It has been illustrated by the numerical examples that the proposed models
compensate the incomplete concept labeling of instances in datasets. Moreover,
the expert rules allow us to compensate a partial availability of targets in
the training set.

The proposed approaches have different computational complexity depending on
the number of concepts, the number of concept values, and the number of
training examples.

The general problem of incorporating the expert rules into neural networks has
been solved. However, there are problems where an application of the proposed
models can significantly improve the accuracy and interpretability of models.
In particular, it is interesting to adapt the proposed models to CBMs which
also deal with concepts and can be combined with the expert rules. This is an
important direction for further research.

\bibliographystyle{unsrt}
\bibliography{Classif_bib,Concept,MYBIB}

\end{document}